\newcommand{\+}[1]{\ensuremath{\boldsymbol{#1}}}
\declaretheoremstyle[%
  spaceabove=-6pt,%
  spacebelow=6pt,%
  headfont=\normalfont\itshape,%
  postheadspace=1em,%
  qed=\qedsymbol%
]{mystyle} 
\newtheorem{theorem}{Theorem}
\newtheorem{proposition}[theorem]{Proposition}
\renewcommand{\qed}{\hfill \ensuremath{\Box}}
\newcommand{\calp}{{\cal P}}
\newcommand{\needcite}[1]{}
\newcommand{\be}{\begin{equation}}
\newcommand{\ee}{\end{equation}}
\newcommand{\nbe}{\begin{equation*}}
\newcommand{\nee}{\end{equation*}}
\newcommand{\bea}{\begin{eqnarray*}}
\newcommand{\eea}{\end{eqnarray*}}
\newcommand{\ignore}[1]{}
\renewcommand{\eqref}[1]{Eq.~\ref{#1}}
\definecolor{mygray}{rgb}{0.35,0.35,0.35}
\definecolor{mybgcolor}{HTML}{FFFED8}
\definecolor{mysumforallcolor}{HTML}{D45500}
\definecolor{darkgreen}{HTML}{1D676A}
\definecolor{darkred}{HTML}{FF2A6A}
\definecolor{darkvio}{HTML}{60146D}
\definecolor{mykeyword}{HTML}{0000FF}
\tiny\color{mygray},
\newenvironment{proof*}[1][\bf\proofname]{\par
  \pushQED{\qed}%
  \normalfont \partopsep=\z@skip \topsep=\z@skip
  \trivlist
  \item[\hskip\labelsep
        \itshape
    #1\@addpunct{.}]\ignorespaces
}{%
  \popQED\endtrivlist\@endpefalse
}
\newenvironment{example*}[1][\bf Example]{\par
  \pushQED{\qed}%
  \normalfont \partopsep=\z@skip \topsep=\z@skip
  \trivlist
  \item[\hskip\labelsep
        \itshape
    #1\@addpunct{.}]\ignorespaces
}{%
  \popQED\endtrivlist\@endpefalse
}
\title{Lifted Convex Quadratic Programming}
\author{
   Martin Mladenov \\
  TU Dortmund Univeristy \\
 \texttt{martin.mladenov@cs.tu-dortmund.de} \\
   \And
  Leonard Kleinhans\\
  TU Dortmund Univeristy\\
   \texttt{leonard.kleinhans@tu-dortmund.de} \\
   \And
  Kristian Kersting \\
  TU Dortmund University \\
  \texttt{kristian.kersting@cs.tu-dortmund.de} \\
}
\begin{document}

\maketitle


\begin{abstract}
Symmetry is the essential element of lifted inference
that has recently demonstrated the possibility to perform 
very efficient inference in highly-connected, but symmetric 
probabilistic models models. This raises the question, 
whether this holds for optimisation problems in general.
Here we show that for a large class
of optimisation methods this is actually the case.
More precisely, we introduce the concept of fractional
symmetries of convex quadratic programs (QPs),
which lie at the heart of many machine learning approaches,
and exploit it to lift, i.e., to compress QPs.
These lifted QPs can then be tackled with the usual 
optimization toolbox (off-the-shelf solvers, cutting plane algorithms,
stochastic gradients etc.). If the original QP exhibits
symmetry, then the lifted one will generally
be more compact, and hence their optimization is likely to 
be more efficient. 
\end{abstract}

\section{Introduction}
Convex optimization is arguably one  of  the main motors behind the success of machine learning as it enables learning 
and inference in a wide variety of statistical machine learning models, such as 
SVMs and LASSO, as well as efficient approximations (e.g.~variational approaches, convex NMF) to hard inference tasks.
The language in which convex optimization problems are specified typically includes inequalities, matrix and tensor algebra,
and software packages for convex optimization such as CVXPY~\citep{cvxpy} recreate this language as an interface between the user and the solver.  
Unfortunately, these algebraic languages have one shortcoming: it is difficult---if not impossible---for the non-expert to 
directly make use of the discrete, combinatorial structure often underlying convex programs;
pixels depend only on neighboring pixels; quantities can flow only along specified links; the reward of placing a cup on a table does not depend on whether the window in the next room is open. Having a richer representation such as first-order logic to express the combinatorial structure and 
an automatic way to utilize it in the solver is likely extend the reach and efficiency of machine learning even further. 


This is akin to statistical relational learning (SRL) that has argued in favor of 
first-order languages when dealing with complex graphical models, see e.g.~\citep{deaedt16} for a recent overview. Moreover, due to the high-level nature of the 
relational probabilistic languages, the low-level (ground) model they produce might often contain redundancies in terms of symmetries: 
``indistinguishable'' entities of the model. Lifted probabilistic inference~\citep{poole03,deaedt16} approaches exploits these symmetries to perform very efficient inference in highly-connected (and hence otherwise often intractable for 
traditional inference approach) but symmetric
models. Intuitively,  one infers which variables are indistinguishable in the ground model (if possible without actually grounding) and solves the model treating the indistinguishable variables as groups instead of individuals. This dimensionality reduction is triggered by the knowledge of the high-level structure.
Unfortunately, SRL does not support convex quadratic optimization approaches commonly used in machine learning.

Here, we demonstrate that the core idea of SRL can be transferred to convex quadratic optimization.
As our main contribution, we  formalize
the notion of symmetries of convex quadratic programs (QPs). Specifically,
we first show that unlike for graphical models, where the notion of indistinguishability of variables is that of exact symmetry (automorphisms of the factor graph), QPs admit a weaker (partitions of indistinguishable variables which are at least as coarse) notion of indistinguishability called a fractional automorphism (FA)  resp.~equitable partition (EP). This implies that more general lifted inference rules for QPs can be designed. This 
is surprising, as it was believed that FAs apply only to linear equations. 
Second, we investigate geometrically how FAs of quadratic forms arise. The existing theory of symmetry in convex quadratic forms 
states that an automorphism of $\+x^TQ\+x$ corresponds to a rotational symmetry of the semidefinite factors of $Q$. We generalize this in that FA of $\+x^TQ\+x$ can be related not only to rotations, but also to certain scalings (as well as other not yet characterized properties of the semidefintie factors). This then results in
the first approximate FA approach based on standard clustering techniques and whitening.
Finally, we tackle the question to which extend kernels might preserve fractional symmetry. All this is embedded in a novel relational QP language,
which is not discussed due to space limitations. 

We proceed as follows. After reviewing prior art, we start developing automorphisms of QPs, introducing the required background on the fly. 
Then, we generalize this to fractional symmetries. Before concluding, we illustrate our theoretical results empirically. 

\section{Prior Art}
\label{related}
Several expressive modeling languages for mathematical programming have been proposed, see e.g.~\citep{wallace2005} for a recent overview. 
These modeling languages are mixtures of declarative and imperative programming styles using sets of objects to index 
multidimensional parameters and LP variables. Recently, 
\cite{cvxpy} enabled an object-oriented approach to constructing optimization problems.
However,  following \citet{logicblox}, one can still argue that
there is a need for 
languages 
that not only facilitates natural algebraic modeling 
but also provides integrated capabilities with logic programming. This is also witnessed by the growing need for relational mathematical
modeling e.g.~in natural language processing~\citep{roth2007,riedel12} and the recent 
push to marry statistical analytic frameworks like R and Python with relational databases~\citep{reABCJKR15}.
The present work is the first that introduces relational convex QPs and studies their symmetries. 
There are symmetry-breaking branch-and-bound approaches 
for {(mixed--)}integer programming~\citep{Margot_2010} that are also featured by commercial solvers. 
QPs, however, do not feature branch-and-bound solvers. 
For the special fragment of LPs, \citet{kersting2015} have introduced a relational language and shown how to exploit fractional symmetries. 
(Relaxed) graph automorphisms and variants have 
been explored for graph kernels~\citep{ShervashidzeB09} and (I)LP-MAP inference approaches~\citep{bui12arxive,mladenov14uai,jernite15icml}. Unfortunately,
their techniques or  proofs do not carry over to (convex) QPs.  \citet{GulerG12} and references in there have studied automorphisms 
but not fractional ones of convex sets.  Finally, our approximate FA approach generalizes \citeauthor{broeckD13}'s \citeyearpar{broeckD13} approach of approximating 
evidence in probabilistic relational models to QPs using real-valued low-rank factorizations.


\section{\bf Exact Symmetries of Convex Quadratic Programs}
Let us start off with exact symmetries of convex QPs. {\bf Lifting convex quadratic programs} essentially amount to reducing the size a model by grouping together ``indistinguishable'' variables and constraints. In other words, they exploit symmetries. To formalize the notion of lifting more concisely 
let us consider a {\bf convex program}, i.e., an optimization problem of the form 
\begin{equation}
\+x^* = \arg\min\nolimits_{\+x\in {\cal D}} J(\+x)\;, \tag{$\clubsuit$}
\end{equation}
over $\+x \in \mathbb{R}^n$, where $J:\mathbb{R}^n \rightarrow \mathbb{R}$ is a convex function, and ${\cal D}$ is a subset of $\mathbb{R}^n$, typically specified as the solution a system of convex inequalities $f_1(\+x) \leq 0, \ldots, f_m(\+x) \leq 0$. A {\bf convex quadratic program} (QP) is an instance of $(\clubsuit)$ where $J(\+x ) = \+x^TQ\+x + \+c^T\+x$ is a quadratic function with $Q \in \mathcal{R}^{n\times n}$ is symmetric and positive semi-definite, and ${\cal D} = \{\+x : A\+x \leq \+b\}$ is a system of linear equations. 
If $Q$ is the zero matrix, the problem is known as a {\bf linear program} (LP). If we add convex quadratic constrants to a quadratic program, we obtain a quadratically constrained quadratic program (QCQP). We will not deal explicitly with QCQPs in this paper, however, by the end of our discussion of quadratic functions, it will be evident that our results can easily be extended to such programs. We shall denote a QP by the tuple ${\+Q\+P} = (Q, \+c, A, \+b)$.   

We are now interested in partitioning the variables  of the program by a partition ${\cal P} = \{P_1,\ldots,P_p\}$, $P_i \cap P_j = \emptyset$, $\bigcup_i P_i = \{x_1,\ldots, x_n\}$, such that there exists at least one solution that {\bf respects} the partition. More formally, ${\cal P}$ is a {\bf lifting partition} of $(\clubsuit)$ if $(\clubsuit)$ admits an optimal solution with $x_i = x_j$ whenever $x_i$ and $x_j$ are in the same class in $\cal P$. We call the linear subspace defined by the latter condition $\mathbb{R}_{\cal P}$. 
Having apriori obtained a lifting partition of the QP, we can restrict the solution space to ${\cal D} {\cap \mathbb{R}}_{\cal P}$. That is, we constrain indistinguishable variables to be equal, knowing that at least one solution will be preserved in this space of lower dimension. Since ground variables of the same class are now equal, they can be replaced with a single aggregated (lifted) variable. The resulting lifted problem has one variable per equivalence class, thus, if the lifting partition is coarse enough, significant dimensionality reduction and in turn run-time savings can be achieved. To recover a ground solution, one assigns the value of the lifted variable to every ground variable in its class. 

One way to demonstrate that a given partition ${\cal P}$ is a lifing partition for $(\clubsuit)$ is by showing that {\bf averaging} any feasilbe $\+x$ over the partition classes (i.e. $\widetilde{x}_i = \frac{1}{|\operatorname{class}(x_i)|}\sum_{x_j \in \operatorname{class}(x_i)} x_j$) yields a new feasible $\widetilde{\+x}$ with $ J(\widetilde{\+x}) \leq J(\+x)$. As a consequence, by averaging any optimal solution we get another optimal solution which respects $\cal P$, implying that $\cal P$ is a lifting partition. One bit of notation that is handy in the analysis averaging operations is the {\bf partition matrix}. To any partition $\cal P$ we can associate a matrix $X^{\cal P} \in \mathbb{Q}^{n\times n}$ such that $X^{\cal P}_{ij} = 1/|\operatorname{class}(x_i)|$ if $x_j \in \operatorname{class}(x_i)$ or $0$ otherwise. With $X^{\cal P}$ defined thusly, averaging $\+x$ over the classes of $\cal P$ is equivalent to multiplying by $X^{\cal P}$, i.e., $\widetilde{\+x} = X^{\cal P}\+x$. Partition matrices are always {\bf doubly stochastic} ($X^{\cal P}\+1 = \+1$), {\bf symmetric} ($(X^{\cal P})^T=X^{\cal P}$), and {\bf idempotent} ($X^{\cal P}X^{\cal P}=X^{\cal P}$) -- as a consequence also {\bf semidefinite}.

\begin{figure}[t]
 
\centering
\begin{subfigure}{0.45\textwidth}
    \includegraphics[width=\textwidth]{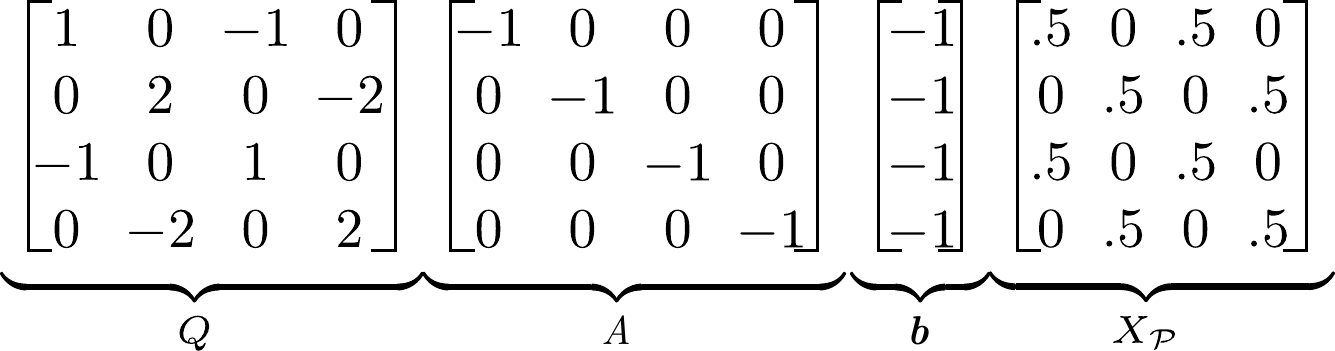}
    \caption{\label{fig:example_1}}
\end{subfigure}\quad\quad
\begin{subfigure}{0.45\textwidth}\centering
    \includegraphics[width=\textwidth]{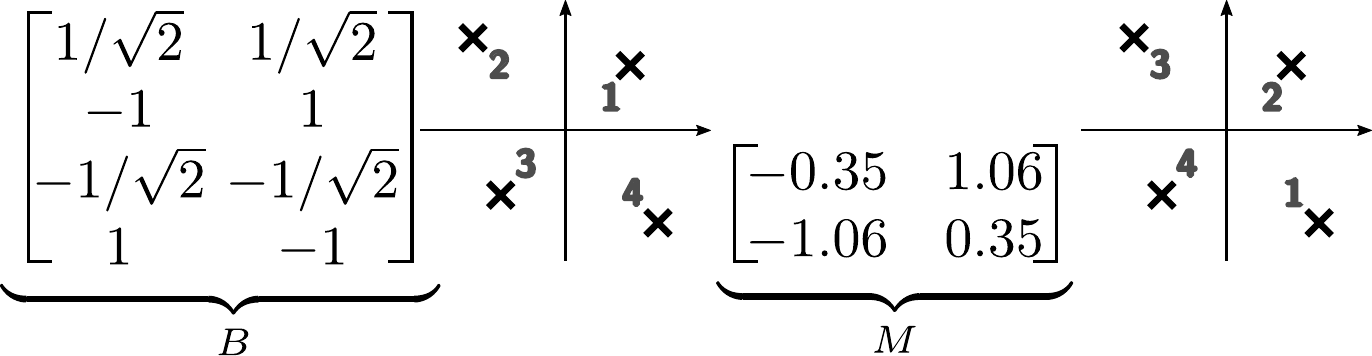}
    \caption{\label{fig:scaling}} 
\end{subfigure}
\caption{Running example for fractional symmetries of QPs. (a) A matrix specification of an example quadratic program $\operatorname{minimize}_{\+x \in {\mathbb{R}^4}}\+x^TQ\+x\; \operatorname{s.t.} A\+x\leq \+b$ and the partition matrix $X^{\cal P}$ of the partition ${\cal P} = \{\{x_1, x_3\},\{x_2, x_4\}\}$. (b) The factor $B$ with $BB^T = Q$ relating to part (a) as well as a sketch of the rows of $B$. Multiplying $B$ by the matrix $M$ on the right, which equates to rescaling and rotating the vectors by $45^\circ$ is a symmetry of $B$ as it yields the same configuration modulo renaming.}
\end{figure}

\begin{example*}
We seek to minimize the function $\+x^TQ\+x$ over $\+x \in \mathbb{R}^4$, subject to $\+x \geq 1$, with $Q$ given in Fig.~\ref{fig:example_1}. As a lifting partition, we propose ${\cal P} = \{\{x_1, x_3\}, \{x_2, x_4\}\}$ (in the next paragraph, we will explain how one could compute this lifting partition). The corresponding parition matrix $X^{\cal P}$ is also shown on Fig.~\ref{fig:example_1}. Let us demonstrate that averaging over the classes of $\cal P$ decreases the value of the solution. For example, for $\+x_0 = [2,1,1,2]^T$, $\+x_0^TQ\+x_0 = 3$. On the other hand, the class-averaged $\widetilde{\+x}_{0} = X_{\cal P}\+x_0 = [1.5, 1.5, 1.5, 1.5]^T$ yields a value of $0$. In fact, one could notice that any feasible $\+x$ respecting the partition yields a value of $0$, so any such solution is optimal. Moreover, if all coordinates of $\+x$ are already greater than or equal to $1$, then the same holds for $\widetilde{\+x}$, as averages cannot be lower than the minimum of the averaged numbers. Thus, the compressed problem reduces to finding any two numbers that greater than or equal to $1$. In a sense, lifting solves this problem without having to resort to numerical optimization.   
\end{example*}

An intuitive way to find lifting partitions is via {\bf automorphism groups} of convex problems. We define the automorphism group of $(\clubsuit)$, $\operatorname{Aut}(\clubsuit)$, as the group of all pairs of permutations $(\sigma, \pi)$ with permutation matrices $(\Sigma, \Pi)$, such that for all $\+x$, $J(\+x) = J(\Pi\+x)$ and $(f_1(\Pi\+x) \leq 0, \ldots, f_m(\Pi\+x) \leq 0) = (f_{\sigma(1)}(\+x) \leq 0, \ldots, f_{\sigma(m)}(\+x) \leq 0)$. In other words, {\bf renaming} the variables yields the same constraints up to reordering. For linear programs (LPs), this is equivalent to $\Sigma A = A\Pi$ and $\Sigma \+b = \+b$ and $\+c^T\Pi = \+c^T$. The partition that groups together $x_i$ with $x_j$ if some $\Pi$ in $\operatorname{Aut}(\clubsuit)$ exchanges them is called an {\bf orbit partition}. An interesting fact is that if $\cal P$ is an orbit partition, $X_{\cal P}$ is the {\bf symmetrizer matrix} of $\operatorname{Aut}(\clubsuit)$,  $X_{\cal P} = \frac{1}{|\operatorname{Aut}(\clubsuit)|}\sum_{(\Sigma, \Pi) \in \operatorname{Aut}(\clubsuit)} \Pi$. One way to detect renaming symmetries is by inspection of the parameters of the problem. E.g., for a convex quadratic program $(Q,\+c,A,\+b)$, a set of necessary conditions for the pair of permutations $(\Sigma, \Pi)$ to be a renaming symmetry is: (i) $\Pi Q = Q \Pi$ (equivalently $\Pi Q \Pi^T = Q $), (ii) $\+c^T\Pi = \+c^T$, (iii) $\Sigma A = A\Pi$, and (iv) $\Sigma \+b = \+b$. Such automorphism groups, or rather, the orbit partitions thereof, can be computed via packages such as Saucy~\cite{codenottiKSM13}. The reason why orbit partitions are lifting partitions of a convex problem, is that $J(X^{\cal P}\+x) = J(\frac{1}{|\operatorname{Aut}|}\sum\nolimits_{(\Sigma, \Pi) \in \operatorname{Aut}}\Pi\+x) \leq \frac{1}{|\operatorname{Aut}|}\sum\nolimits_{(\Sigma, \Pi) \in \operatorname{Aut}}J(\Pi\+x) = J(\+x)$, the inequality being due to convexity of $J$. Recalling our example on Fig.~\ref{fig:example_1}, we notice that permutations renaming row/column $1$ to $3$ resp. $2$ to $4$ are automorphisms, and our proposed $\cal P$ is an orbit partition.   


For the special case of LPs, 
\citet{GroheKMS14} have proven that 
equitable partitions 
act as lifting partitions. An {\bf equitable partition} of a square symmetric ${n\times n}$ matrix $M$ is a partition ${\cal P}$ of $1,\ldots,n$, such that $X^{\cal P}$ satisfies $X^{\cal P}M = MX^{\cal P}$. For rectangular matrices, we say that a partition $\cal P$ of the columns is equitable, if there exists a partition of the rows $\cal Q$ such that $X^{\cal Q}M = MX^{\cal P}$. For LPs, we say that a partition of the variables $\cal P$ is equitable if there exists a partition of the constraints $\cal Q$ such that: $\+c^TX^{\cal P}=\+c^T$, $X^{\cal Q}\+b = \+b$, and $X^{\cal Q}A = AX^{\cal P}$. Equitable partitions and their corresponding partition matrices are refered to as {\bf fractional automorphisms} or {\bf fractional symmetries}, as they satisfy the same conditions as automorphisms from the previous paragraph, except that $X^{\cal P}$ is a doubly stochastix matrix and not a permutation matrix. 
Moreover, equitable partitions have an equivalent combinatorial characterization. A partition $\cal P$ of $M\in{n\times n}$ is equitable if for all $i,j$ in the same class $P$ and every class $P^\prime$ (including $P^\prime = P$), we have $\sum_{k \in P^\prime} M_{ik} = \sum_{k \in P^\prime} M_{jk}$. In other words, if we reorder the rows and columns of $M$ such that indices of the same class are next to eachother, $M$ will take on a block-rectangular form where every row (and column) of the block has the same sum. One special flavor of equitable partitions are what we will call {\bf counting partitions}, where a narrower condition holds, $|\{k \in P^\prime | M_{ik} = c\}| = |\{k \in P^\prime | M_{jk} = c\}|$ for all $c\in\mathbb{R}$, and $M_{ii} = M_{jj}$ if $i,j$ are in the same class. They partition $M$ into blocks where each row (and column) have the same count of each number. The equitable partition of our example is such a partition. In fact, any orbit partition of a permutation group is a counting partition as well. 
Equitable partitions have several very attractive properties when used as lifting partitions. First, the coarsest equitable partition (as well as the coarsest counting equitable partition) of a matrix is computable in ${\cal O}((e + n)\log(n))$ time, where $e$ is the number of non-zeroes in the matrix, via an elegant algorithm called color refinement. Second, the coarsest equitable partition is at least as coarse as the orbit partition of a matrix, hence it offers more compression. 

\section{Fractional Symmetry of Convex Quadratic Programs}


Having developed automorphisms of convex QPs, we now move on to our main contributions. We develop
FA esp.~EPs of a convex QP.
We start off with showing that they are lifted partitions. Then, we 
provide a geometric interpretation and investigate whether . 
kernels preserve fractional symmetries. 

{\bf Equitable Partitions of Quadratic Programs:}
We start be proving that the lifting partition of a convex QP captures its symmetries.
\begin{theorem}
\label{thm:lift}
Let $\+Q\+P = (Q, \+c, A, \+b)$ be a convex quadratic program.  If ${\cal P}$ is a partition of the variables of $\+Q\+P$, such that: (a) $X^{\cal P}Q = QX^{\cal P}$ and $\+c^TX^{\cal P} = \+c^T$, (b) there exists a partition ${\cal Q}$ of the constraints of $\+Q\+P$ such that $X^{\cal Q}\+b = \+b$ and $X^{\cal Q}A = AX^{\cal P}$, then $\cal P$ is a lifting partition for $\+Q\+P$. 
\end{theorem}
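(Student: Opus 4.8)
The plan is to invoke the \textbf{averaging} criterion established in the background: to prove $\cal P$ is a lifting partition it suffices to show that the map $\+x \mapsto \widetilde{\+x} := X^{\cal P}\+x$ sends every feasible point to a feasible point without increasing the objective $J(\+x) = \+x^TQ\+x + \+c^T\+x$. Granting that, applying the map to an optimal $\+x^*$ produces a feasible $\widetilde{\+x^*}$ with $J(\widetilde{\+x^*}) \le J(\+x^*)$, hence also optimal; and since $\widetilde{\+x^*} = X^{\cal P}\+x^*$ replaces each coordinate by the average over its class, it satisfies $x_i = x_j$ whenever $x_i, x_j$ lie in the same class, i.e.\ it respects $\cal P$. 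So I would fix an arbitrary feasible $\+x$ (with $A\+x \le \+b$) and verify the two properties separately.

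For feasibility, I would push the constraint map through condition (b). Writing $A\widetilde{\+x} = AX^{\cal P}\+x = X^{\cal Q}A\+x$ using $AX^{\cal P} = X^{\cal Q}A$, the key observation is that $X^{\cal Q}$, being a partition matrix, is entrywise nonnegative. Multiplying the feasibility inequality $A\+x \le \+b$ on the left by a nonnegative matrix preserves it, so $X^{\cal Q}A\+x \le X^{\cal Q}\+b = \+b$, the last equality being the second half of (b). Chaining these gives $A\widetilde{\+x} \le \+b$.

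For the objective, the linear term is handled immediately: $\+c^T\widetilde{\+x} = \+c^TX^{\cal P}\+x = \+c^T\+x$ by (a). The quadratic term is the heart of the argument. Using that $X^{\cal P}$ is symmetric and idempotent and commutes with $Q$ (condition (a)), I would simplify $\widetilde{\+x}^TQ\widetilde{\+x} = \+x^TX^{\cal P}QX^{\cal P}\+x = \+x^TQX^{\cal P}X^{\cal P}\+x = \+x^TQX^{\cal P}\+x$. It then remains to show $\+x^TQ(I - X^{\cal P})\+x \ge 0$. Here I would exploit that $I - X^{\cal P}$ is again a symmetric idempotent which commutes with $Q$, so that $\+x^TQ(I-X^{\cal P})\+x = \+x^T(I-X^{\cal P})Q(I-X^{\cal P})\+x = [(I-X^{\cal P})\+x]^TQ[(I-X^{\cal P})\+x]$, which is nonnegative because $Q$ is positive semidefinite. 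Combining, $\widetilde{\+x}^TQ\widetilde{\+x} \le \+x^TQ\+x$, and together with the linear term $J(\widetilde{\+x}) \le J(\+x)$.

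The main obstacle is this quadratic step: unlike the linear program case, a doubly stochastic $X^{\cal P}$ is not a permutation, so the reduction $X^{\cal P}QX^{\cal P} = QX^{\cal P}$ and the final ``complete the square'' rewriting both rely essentially on the three structural properties of $X^{\cal P}$ (symmetry, idempotence, commutation with $Q$) acting in concert with the positive semidefiniteness of $Q$. I would take particular care to verify that the commutation in condition (a), $X^{\cal P}Q = QX^{\cal P}$, is exactly what lets $I - X^{\cal P}$ pass through $Q$; this is the place where the proof would break if any one of these hypotheses were dropped, and it is precisely what makes the fractional (rather than permutation) symmetry suffice for the quadratic form.
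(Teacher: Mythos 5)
Your proof is correct, and its feasibility half and the treatment of the linear term coincide exactly with the paper's own argument. Where you genuinely diverge is the quadratic term. The paper goes spectral: since $Q$ and $X^{\cal P}$ are symmetric and commute, they are simultaneously diagonalizable, $Q = U\Lambda U^T$ and $X^{\cal P} = UKU^T$, and the bound $\+x^T X^{\cal P} Q X^{\cal P}\+x \le \+x^T Q\+x$ reduces to the eigenvalue inequality $\kappa_i^2\lambda_i \le \lambda_i$, which holds because $\lambda_i \ge 0$ (positive semidefiniteness of $Q$) and $|\kappa_i|\le 1$ (double stochasticity of $X^{\cal P}$). Your argument instead stays purely algebraic: idempotence and commutation collapse $X^{\cal P} Q X^{\cal P}$ to $QX^{\cal P}$, and the residual $\+x^T Q(I - X^{\cal P})\+x$ is rewritten, using that $I - X^{\cal P}$ is again a symmetric idempotent commuting with $Q$, as $\bigl[(I-X^{\cal P})\+x\bigr]^T Q \bigl[(I-X^{\cal P})\+x\bigr] \ge 0$. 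Both are valid; the difference is which structural property of the partition matrix carries the load. The paper's spectral route uses symmetry and double stochasticity but never idempotence, so it extends to any symmetric doubly stochastic matrix commuting with $Q$; your route uses symmetry and idempotence but never double stochasticity in the objective bound, so it extends to any orthogonal projector commuting with $Q$, and it avoids the spectral theorem entirely, making it the more elementary and self-contained of the two. For partition matrices, which enjoy all three properties, the arguments are interchangeable.
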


\begin{proof*}
We proceed along the lines drawn out in the previous section and show that for any feasible $\+x$, $\+x^\prime = X^{\cal P}\+x$, the class-averaged $\+x$, is both feasible and $J(\+x^\prime) \leq J(\+x)$.
Let us start with the latter. Note that both $Q$ and $\+X^{\cal P}$ are diagonalizeable (i.e. admit an eigendecomposition). It is known that if two diagonalizeable matrices commute (as is our starting hypothesis, $X^{\cal P}Q = QX^{\cal P}$), then they are also simultaneously diagonalizeable. That is, there exists an orthonormal basis of vectors $\+u_1,\ldots,\+u_n$ such that $Q = \sum_i \lambda_i \+u_i \+u_i^T = U\Lambda U^T$ and $X^{\cal Q} = \sum_i \kappa_i \+u_i \+u_i^T = UKU^T$, where the $\lambda_i$'s and $\kappa_i$'s are nonnegative scalars. Now,
$J(\+x^\prime) = J(X^{\cal P}\+x) = \+x^T(X^{\cal P})^T Q X^{\cal P}\+x + \+c^TX^{\cal P}\+x$. 
From our discussion so far and assumption (a), this is equal to $\+x^TU K^T U^T U \Lambda U^T U K U^T\+x + \+c^T\+x = \+x^T U\Lambda K^2 U^T\+x + \+c^T\+x$. The key observation is that because $X^{\cal P}$ is doubly stochastic, $|\kappa_i| \leq 1$. Hence $\+x^TU K^2 \Lambda\+x = \sum_{i} \kappa_i^2\lambda^i \+x^T\+u_i\+u_i^T\+x \leq \sum_{i} \lambda^i \+x^T\+u_i\+u_i^T\+x$ as $\lambda_i\+x^T\+u_i\+u_i^T\+x$ is a nonnegative quantity. This entails $J(\+x) \geq J(\+x^\prime)$.  

Regarding feasibility, because ${X^{\cal Q}}$ is a matrix of nonnegative numbers, $A\+x \leq \+b$ implies $X^{\cal Q}A\+x \leq X^{\cal Q}\+b$. Due to 
(b), this becomes $AX^{\cal P}\+x \leq\+b$, that is, $A\+x^{\prime} \leq\+b$, demonstrating the feasibility of $\+x^{\prime}$.

We have thus satisfied the two sufficient conditions stated in the previous section and shown that any $\cal P$ satisfying our assumptions is a lifting partition for $\+Q\+P$.
\end{proof*}
\begin{example*}
Recall $Q$ from our running example on Fig.~\ref{fig:example_1}.
However, this time we propose ${\cal P}^\prime = \{x_1, x_2, x_3, x_4\}$ as a lifting partition with $X^{{\cal P}^\prime} = \frac{1}{4}\cdot 1_4$, where $1_4$ is the $4\times 4$ matrix of ones. We observe that $Q X^{{\cal P}^\prime} = X^{{\cal P}^\prime}Q  = 0_4 $, moreover, if we introduce the constraint partition ${\cal Q}^\prime = \{y_1, ..., y_4\}$ with partition matrix $X^{{\cal Q}^\prime} = X^{{\cal P}^\prime}$, we have that $X^{{\cal Q}^\prime}A = AX^{{\cal P}^\prime}$ and $X^{{\cal Q}^\prime}\+b = \+b$. According to Thm.~\ref{thm:lift} ${\cal P}^\prime$ is a lifting partition of the QP in question. 
\end{example*}

There are two interesting observations to be made here. First, we have gained even further compression over our previous attempt, having a compressed problem with $1$ variable instead of $2$. Second, there is no automorphism of $Q$ that could possibly exchange $x_1$ and $x_2$. As fractional symmetries generalize exact symmetries, it is to be expected that coarser equitable partitions than the orbit partition $Q$ could satisfy the conditions of Thm~\ref{thm:lift}. 
Moreover, these observations allow one to gain insight into what fractional symmetry means geometrically for a dataset. This is important as
the matrix $Q$ relates to the data we feed into the optimization problem for many QPs. 
For example, in the SVM dual quadratic program, the entries of $Q$ are inner products of the feature vectors of the training examples.

{\bf Geometry of Fractionally-Symmetric QPs:}
Our investigation is inspired by the characterization of automorphisms of semidefinite matrices and quadratic forms. One way to think about a semidefinite matrix $Q$ is as the Gram matrix of a set of vectors, i.e. $Q = BB^T$ where $B$ is an ${n\times k}$ matrix and $k \geq \operatorname{rank}(Q)$. 
In this light, the quadratic form $\+x^T Q\+x$ can be seen as the squared Euclidean norm of a matrix-vector product. That is, $\+x^T Q\+x = \+x^T BB^T\+x =  (B^T\+x)^T(B^T\+x) = ||B^T\+x||^2$. It is a basic fact that the Euclidean norm is invariant under orthonormal transformations, that is, for any orthonormal matrix $O$ and any vector $\+y$, $||O^T\+y||=\+y$ as $\+y^TOO^T\+y = \+y^T\+y$. Thus, suppose we have a {\bf rotational autmorphism} of $B$, i.e., a pair of orthonormal matrix $O$ and permutation matrix $\Pi$, such that $\Pi B = BO$ or also $\Pi B O^T = B$. That is, rotating the tuple of vectors that are the rows of $B$ together yields same tuple back, but in different order. Observe then, that $\Pi$ would be a renaming automorphism for $Q$, since $\Pi Q\Pi^T = \Pi B O^T O B \Pi^T = BB^T = Q$, implying $\Pi Q = Q\Pi$. Moreover, if the right dimension (number of columns) $B$ is held fixed, the converse is true as well \cite{Bremner09}. That is, not only do rotational symmetries of $B$ correspond to renaming symmetries of $Q$, but vice-versa, as for fixed $k$, the semidefinite factors of $Q$ are unique up to rotations.
\begin{example*}
Our $Q$ from Fig.~\ref{fig:example_1} can be factored into $BB^T$ as shown on Fig.~\ref{fig:scaling}. The Figure also shows the plot of these vectors. If we were to rotate them by $180^\circ$ counter-clockwise, we would get back the same set of vectors, but in the order $\{x_3, x_4, x_1, x_2\}$. The permutation matrix according to this reordering is a renaming automorphism of $Q$.  
\end{example*}
Using the case of automorphisms as a motivation, we now turn to fractional automorphisms. More precisely, given a doubly stochastic and idempotent matrix $X$, such that $XQ = QX$, we would like to derive a similar characterization of $X$ in terms of $B$. As we prove now, this is indeed possible. 
\begin{theorem}
Let $X$ be a symmetric and $X$ is idempotent (as our usual color-refinement automorphisms are) matrix, and $Q = BB^T$ be a positive semidefinite matrix with $B$ having full column rank. Then $XQ = QX$ \emph{if and only if} there exists a symmetric matrix $R$ such that $XB = BR$.
\label{thm:bchar}
\end{theorem}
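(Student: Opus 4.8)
The plan is to prove the two implications separately, treating the ``if'' direction as a warm-up and concentrating the real effort on the ``only if'' direction, where establishing the \emph{symmetry} of $R$ is the crux.

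For the easy direction, I would assume $XB = BR$ with $R = R^T$ and simply substitute, using that $X$ is symmetric. Since $B^T X = (XB)^T = (BR)^T = R^T B^T = R B^T$, we get $XQ = (XB)B^T = B R B^T$ and likewise $QX = B(B^T X) = B R B^T$, so the two coincide. It is worth noting that this half uses only the symmetry of $X$ and $R$, not the idempotency of $X$.

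For the hard direction, assume $XQ = QX$ and build $R$ by hand. The first observation is that, because $B$ has full column rank, $\operatorname{Im}(Q) = \operatorname{Im}(BB^T) = \operatorname{Im}(B)$; write $V = \operatorname{Im}(B)$. The commutation hypothesis then forces $X$ to preserve this subspace: any $v \in V$ can be written as $v = Qy$, whence $Xv = XQy = QXy \in \operatorname{Im}(Q) = V$. In particular every column of $XB$ lies in $V = \operatorname{Im}(B)$, so each is a linear combination of the columns of $B$, and full column rank makes the coefficients unique. This yields a well-defined matrix $R$ with $XB = BR$; concretely one may take $R = (B^TB)^{-1}B^T X B$, using that $B(B^TB)^{-1}B^T$ is the orthogonal projector onto $V$ and hence fixes $XB$.

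The main obstacle is the remaining claim that $R$ is symmetric: the explicit formula for $R$ is not visibly symmetric, so one must feed the hypothesis back in. I would substitute $XB = BR$ into $XQ = QX$: the left-hand side becomes $XBB^T = BRB^T$, and the right-hand side becomes $BB^TX = B(XB)^T = BR^TB^T$ (again using $X = X^T$). Equating gives $BRB^T = BR^TB^T$. I then cancel $B$ on the left and $B^T$ on the right using the one-sided inverses supplied by full column rank, namely left-multiplying by $(B^TB)^{-1}B^T$ (a left inverse of $B$) and right-multiplying by $B(B^TB)^{-1}$ (a right inverse of $B^T$), to conclude $R = R^T$. This completes the construction and the proof.
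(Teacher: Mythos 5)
Your proposal is correct, and while the ``if'' direction and the construction of $R$ essentially match the paper (the paper takes $R = B^TXB(B^TB)^{-1}$ and verifies $XB=BR$ by a direct computation using $XQ=QX$ and the right pseudoinverse $B(B^TB)^{-1}$ of $B^T$; your invariant-subspace argument that $X$ preserves $\operatorname{Im}(B)=\operatorname{Im}(Q)$ is a more conceptual justification of the same formula, transposed), your proof of the \emph{symmetry} of $R$ takes a genuinely different route. The paper shows symmetry by computing $R^TR$ and reducing it, via the commutation relation \emph{and} the idempotency $X^2 = X$, to $R^TR = R$; since $R^TR$ is automatically symmetric, so is $R$. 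You instead substitute $XB = BR$ back into $XQ = QX$ to get $BRB^T = BR^TB^T$ and cancel $B$ on the left and $B^T$ on the right with the one-sided inverses supplied by full column rank, which never uses idempotency. This makes your argument strictly more general: it establishes the equivalence for any symmetric $X$ commuting with $Q$, showing the idempotency hypothesis in the theorem statement is superfluous. What the paper's computation buys in exchange is extra structure: from $R = R^TR$ together with symmetry one gets $R^2 = R$, i.e.\ $R$ inherits the projection property of $X$, which is natural in the paper's setting where $X$ arises as a partition (averaging) matrix --- but this is a by-product, not something needed for the stated claim.
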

\begin{proof*}
({\bf only if direction}): Suppose there exists an $R$ such that $R = R^T$ and $XB = BR$ . Then, 
$XQ = XBB^T = BRB^T\;.$ Making use of $R = R^T$ this rewrites as $BR^TB^T = B(BR)^T = B(XB)^T = BB^TX^T = QX\;,  $
as $X$ is also symmetric. 

({\bf if direction}): Let $XQ = QX$ with $X$ being idempotent and symmetric. Then, let $R = B^TXB(B^TB)^{-1}$. Observe that $B(B^TB)^{-1}$ exists and is the right pseudoinverse of $B^T, i.e.,  B^TB(B^TB)^{-1} = I_k,$ as $B$ has full column rank.  Therefore, left multiplying by $I_k$ yields
$XB = XB B^TB(B^TB)^{-1} =  BB^TXB(B^TB)^{-1} = BR\;.$
It remains to demonstrate that $R$ is symmetric. Recall that $R^TR$ and $(R^TR)^{-1}$ are symmetric matrices. Then, 
$R^TR = \left[B^TXB(B^TB)^{-1}\right]^TB^TXB(B^TB)^{-1}\;.$ Since, $(B^TB)^{-T}= (B^TB)^{-1}$ and $XBB^T=BB^TX$, this simplifies to 
$(B^TB)^{-1}(B^TB)XXB(B^TB)^{-1}$. Since $XX=X$ and using $I_k$, this simplifies to 
$B^TXB(B^TB)^{-1} = R\;.$
Hence, as $R^TR = R$, $R$ is symmetric.
\end{proof*}

This theorem holds the key to explaining why all $4$ dimensions in our example are compressed together. To see why, consider the situation on Fig~\ref{fig:scaling}. 
\begin{example*}
Fig~\ref{fig:scaling} shows the factor $B$ of $Q$ (as well as a sketch of its rows). It also shows an invertible matrix $M$, which consists of a clockwise rotation by $45^\circ$ which aligns the vectors with the axes, a rescaling of the vectors along the axes, then a further $45^\circ$. Multiplying $B$ by this matrix yields back the same row vectors modulo a cyclic permutation, exchanging $x_4$ with $x_1$, $x_1$ with $x_2$ and so on, i.e. $\Sigma B = BM$. Moreover $BMM^TB^T\neq Q$. The group of $\{M, M^2, M^3, M^4\}$ is thus a group that does not correspond to any group of automorphisms of $Q$, yet, the symmetrizer matrix $\frac{1}{4}\sum_{i=1}^4 M^i$ is symmetric (and equal to $0_2$), so it qualifies under the conditions of Thm.~\ref{thm:bchar}.  
\end{example*}


From this we can conclude that certain scaling symmetries of $B$ do not result in symmetries of $Q$, but do result in {\bf fractional symmetries} of $Q$ (Thm.~\ref{thm:bchar} ). On the other hand, by Thm.~\ref{thm:lift}, we can also infer that these symmetries can safely be compressed out when minimizing the quadratic form $\+x^TQ\+x$. Note finally that even these symmetries do not exhaust the possible matrices of Thm.~\ref{thm:bchar} -- Thm.~\ref{thm:bchar} allows for partitions and matrices that do not correspond to any group. 
Characterizing them is an exciting avenue for future work.

\begin{figure*}[t]
\centering
\includegraphics[width=0.16\textwidth]{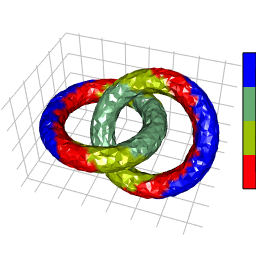}
\quad
\includegraphics[width=0.16\textwidth]{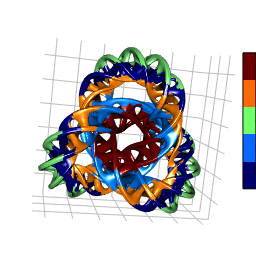}
\quad
\includegraphics[width=0.16\textwidth]{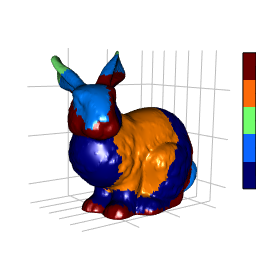}
\quad
\includegraphics[width=0.16\textwidth]{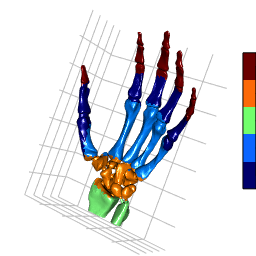}
\quad
\includegraphics[width=0.16\textwidth]{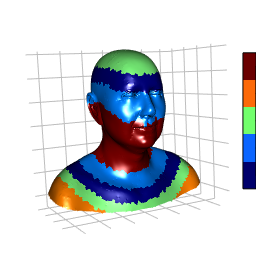}

\caption{Approximate EPs without scalings on 3D datasets. The colors encode the rotational symmetries under a budget of 4 resp.~5 orbits.  The largest dataset is the "hand" with $327.323$ points (a clique with $50\cdot 10^{10}$ edges) running in $<5$ secs using $2500$ anchor points. (Best viewed in color)\label{approx}}
\end{figure*}
Unfortunately, the (rotational) automorphism group of most Euclidean datasets consists of the identity transformation alone. This follows from 
the same result for convex bodies, see e.g. \citep{GulerG12}, and is to be expected, since the symmetry properties of a
given dataset $B$ can easily be destroyed by slightly perturbing the body. To bypass this, we propose the first {approximate lifting} approach for 
Euclidean datasets. 
\begin{proposition}
Let $B$ be an Euclidean dataset and $D$ its corresponding pairwise distance matrix. Then $B_{i\cdot}$ and $B_{j\cdot}$ are in the same (rotational) orbit 
if an only if $B_{i\cdot}$ and $B_{j\cdot}$ have the same sorted distances to all other data points. 
\label{thm:approx}
\end{proposition}
\begin{proof*}
The EP of $D$ encodes the symmetries of $B$. To compute it, we represent it as a colored graph $C$ of $D$. We note that $C$ is a clique 
with edge colors encoding distances.  We turn this into a node-colored graph by assigning the same color to all nodes that have identical edge-color signatures.  
Runing color-refinement on this graph does not add any new color since since $C$ is a clique.
\end{proof*}
This suggest a simple way to compute proper approximations of (rotational) EPs of $B$: (1, optional) Whiten the data to capture some scalings, (2) compute the pairwise distance matrix $D$ of $B$ (potentially using anchor points), (3)
sort each row of $D$, and (4) run any cluster algorithm on the sorted distance matrix. This is illustrated in Fig.~\ref{approx} and should 
be explored further in future work.

{\bf Kernels and Equitable Partitions:}
Finally, we touch upon the relationship between the fractional symmetry of data vectors and kernels. Kernel functions often appear in conjunction with quadratic optimization in machine learning problems as a means of enriching the hypothesis space of a learner. From an algebraic perspective, the essence approach is to replace the entries of the semidefinite matrix $Q$ with the values of a kernel function, which represents the inner product of data vectors under some non-linear transformation in a high dimensional space. That is, in place of $Q_{ij} = \left<B_{\cdot i}, B_{\cdot j} \right>$, we use $K_{ij} = k(B_{\cdot i}, B_{\cdot j}) = \left<\phi(B_{i\cdot}), \phi(B_{j\cdot}) \right>$, where $\phi: \mathbb{R}^n \rightarrow \mathbb{R}^m$ is some non-linear function with $m$ much greater than $n$ or even infinite. Due to the prevalence of kernels, it is important to understand whether kernels preserve or destroy symmetries. 
Here, we will examine two popular kernels, the polynomial kernel, $k^{\mathrm{POLY}}(\+x, \+y) = (\left< \+x, \+y \right> + 1)^g$ and $k^{\mathrm{RBF}}(\+x, \+y) = \exp(-2\gamma^2||\+x - \+y||^2_2)$, where $g$ is a positive integer and $\gamma$ is a nonzero real number. We find that in both cases, if $Q = BB^T$ admits a {\bf counting equitable partition}, then $K$ will admit the same partition as well, i.e., these two kernels preserve fractional symmetry of $Q$ up to counting (recall, that includes rotational symmetry of $B$): 
\begin{proposition}
Let $B$ be a matrix whose rows are data instances. Then, if $Q = BB^T$ admits a counting equitable partiton ${\cal P}$ with partition matrix $X^{\calp}$, then both kernel matrices {\bf (a)} $K^{\rm POLY}$  and {\bf (b)} $K^{\rm RBF}$  of this set of vectors admit the same counting partition. 
\end{proposition}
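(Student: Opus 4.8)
The plan is to reduce both kernel matrices to \emph{entrywise} (more precisely, block-wise entrywise) transformations of $Q$, and then to observe that counting equitable partitions are preserved under such transformations. Concretely, I would first establish the following principle: if $\calp$ is a counting equitable partition of a symmetric matrix $M$ and, for each ordered pair of classes $(P,P')$, we replace every entry $M_{ik}$ with $i\in P$, $k\in P'$ by $f_{P,P'}(M_{ik})$ for some function $f_{P,P'}:\reals\to\reals$, then the resulting matrix $M'$ still satisfies the counting condition for $\calp$, provided the diagonal entries of $M'$ are constant within each class. The proof of this principle is a one-line multiset argument: for $i,j$ in the same class and any target value $d$, $|\{k\in P' : f_{P,P'}(M_{ik})=d\}| = \sum_{c:\,f_{P,P'}(c)=d}|\{k\in P' : M_{ik}=c\}|$, and every summand is unchanged when $i$ is replaced by $j$, precisely because $\calp$ is a \emph{counting} partition of $M$. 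The separate diagonal requirement $M'_{ii}=M'_{jj}$ is not implied by the multiset identity (equal multisets do not pin down which element is the diagonal), so I would verify it explicitly in each case.

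For part (a), the polynomial kernel is already in the required form with a single function for all blocks: since $K^{\rm POLY}_{ik} = (Q_{ik}+1)^g$, it is the entrywise image of $Q$ under $f(t)=(t+1)^g$. The diagonal condition is immediate, as $Q_{ii}=Q_{jj}$ for $i,j$ in one class (the diagonal part of the counting hypothesis) forces $K^{\rm POLY}_{ii}=K^{\rm POLY}_{jj}$. Invoking the principle above finishes this case.

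For part (b), the RBF kernel is not an entrywise function of $Q$ alone, and this is where the counting hypothesis does the real work. Expanding the squared distance gives $\|B_{i\cdot}-B_{k\cdot}\|_2^2 = Q_{ii}+Q_{kk}-2Q_{ik}$, so that $K^{\rm RBF}_{ik}=\exp(-2\gamma^2(Q_{ii}+Q_{kk}-2Q_{ik}))$. The dependence on the diagonal entries $Q_{ii},Q_{kk}$ would in general destroy the partition; however, the counting condition guarantees that $Q_{ii}$ equals a constant $d_P$ for all $i\in P$ and $Q_{kk}$ equals a constant $d_{P'}$ for all $k\in P'$. Restricted to the block $(P,P')$ the kernel therefore \emph{is} an entrywise function of $Q$, namely $f_{P,P'}(t)=\exp(-2\gamma^2(d_P+d_{P'}-2t))$, which may differ from block to block but is well defined. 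Since $K^{\rm RBF}_{ii}=1$ for every $i$, the diagonal condition holds trivially, and the principle again yields the claim.

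I expect the main obstacle to be recognizing, and then cleanly exploiting, exactly why the plain equitable (as opposed to counting) property is insufficient. Equitable partitions only preserve \emph{row sums} within blocks, which are stable under linear maps but not under the nonlinear $t\mapsto(t+1)^g$ or the exponential; the counting property preserves the full multiset of entries within each block, which is stable under arbitrary entrywise maps. A second, subtler point specific to the RBF case is the diagonal constancy $Q_{ii}\equiv d_P$: without it the $Q_{ii}+Q_{kk}$ term obstructs the reduction to an entrywise map, so the argument would break down for a merely equitable partition. Making these two distinct roles of the counting hypothesis explicit is the crux of the write-up.
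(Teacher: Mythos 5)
Your proposal is correct and takes essentially the same route as the paper: both arguments reduce each kernel to a (block-wise) entrywise function of $Q$ --- invoking the within-class constancy of the diagonal entries $Q_{ii}$ to handle the RBF case --- so that the counting (multiset) condition transfers from $Q$ to $K$. Your explicit ``entrywise maps preserve counting partitions'' principle, proved by the multiset identity, is just a cleaner packaging of what the paper verifies directly for each kernel (and your RBF constants are in fact more careful than the paper's).
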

\begin{proof*}
Recall that an equitable partition $\cal P$ is a counting partition for $Q$ if for all $x_i, x_j$ in the same class $P$, and for every class $P^\prime$ (including $P^\prime = P$), $|\{x_k \in P^\prime | Q_{ik} = c\}| = |\{x_k \in P^\prime | Q_{jk} = c\}|$ for all $c\in\mathbb{R}$, and $Q_{ii} = Q_{jj}$. 
{\bf (a)}  A direct consequence of this definition is that if $\cal P$ is a counting partition for $Q$, it will be a counting partition for every other matrix whose equality pattern respects that of $Q$, in other words, $Q_{ij} = Q_{pq} \Rightarrow K_{ij} = K_{pq}$. $K^{\rm POLY}$ has exactly this property: $K^{\rm POLY}_{ij} = (\left< B_{i\cdot}, B_{j\cdot} \right> + 1)^g = (Q_{ij} + 1)^g$. It is clear that if $Q_ij$ and $Q_pq$ are equal, the values of the last expression would be equal as well. 
{\bf (b)} First, we note
$K^{\rm RBF}_{ij}  = \exp(-2\gamma^2||B_{i\cdot}||^2)$$\exp(-2\gamma^2||B_{j\cdot}||^2)$$\exp(-\gamma^2\left< B_{i\cdot}, B_{j\cdot} \right>)$. This allows one to rewrite
$K^{\rm RBF}$ in terms of $Q$:  $K^{\rm RBF}_{ij} =  \exp(-2\gamma^2Q_{ii})\exp(-2\gamma^2Q_{jj})\exp(-\gamma^2Q_{ij})$. Now, let $x_i, x_j \in P$ and $x_p, x_q \in P^\prime$ such that $Q_{ip} = Q_{jq}$. Since $Q_{ii} = Q_{jj}$ (by virtue of being in $P$) and $Q_{pp} = Q_{qq}$ (by virtue of $P^\prime$), we have that $K^{\rm RBF}_{ip} = K^{\rm RBF}_{jq}$ hence counts across classes are preserved. 
\end{proof*}

To summarize, in order to lift a convex QP, we compute its quotient model w.r.t its EP as illustrated in Fig.~\ref{liftingQPs}. For the two popular kernels---polynomial and RBF---this  also leads to valid liftings.

\begin{figure*}[t]
\centering
\includegraphics[width=0.32\textwidth]{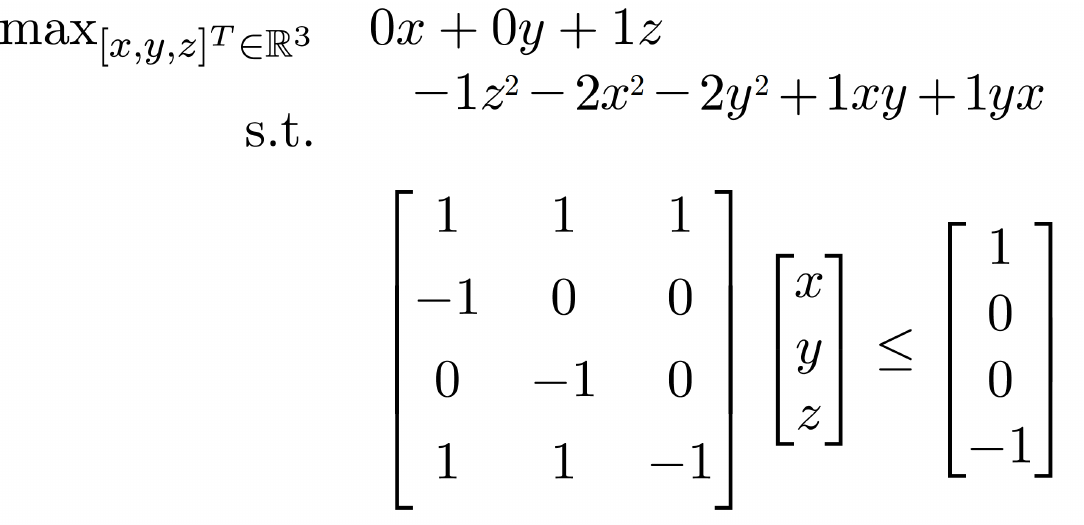}
\quad\quad\quad\quad\quad\quad
\includegraphics[width=0.40\textwidth]{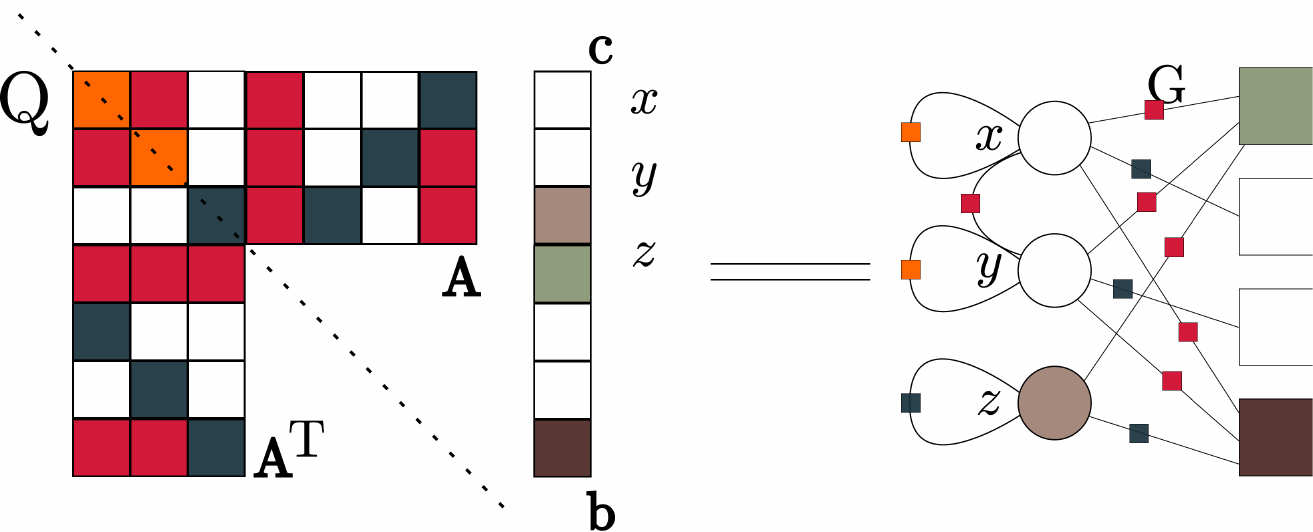}
\caption{Our theoretical results prove a QP (left) can be lifted using color-refinement, see e.g.~\citep{kersting2015}. It is first encoded as a colored graph (right). After running color-refinement on the graph, nodes with the same color form the quotient QP of the original QP. (Best viewed in color)\label{liftingQPs}}
\end{figure*}

\section{Empirical Illustration}

Our intention here is to investigate the following question {\bf (Q)}:  Can machine learning problems potentially benefit from fractional symmetries of QPs?
Generally this is to be expected e.g.~for classification as if all the data points of an orbit share the same label, then this symmetry effectively lowers the VC-dimension and sample complexity of the classifier~\citep{abu-Mostafa93}.

In a first experiment, we considered SVM classifiers for varying amounts of overlap between two classes represented by spherical Gaussians.
This dataset was chosen in order to depict the potential of approximate symmetries. We trained 
a lifted SVM (LSVM) with $200$ approximate color classes and a conventional SVM, both with RBF kernels, on $2500$ training examples per class.
We used a grid search together with CV for selecting $\gamma=\{0.25, 0.50, 1.00, 2.00, 4.00\}$ and $C=\{0.5,1.0,2.0 \}$. The performance was measured
on an independently drawn test set of $5000$ data points per class. For approximate lifting we used k-Means using the Euclidean metric and $500$ anchor points.  
For $4$ units apart class centers, 
the SVM achieved an error of $0.02$ in $20$ secs (all numbers in this experiment are averaged over 10 reruns and rounded to the second digit), 
while the LSVM achieved $0.02$ in $1.7$ secs.  An SVM using just the anchor points as training set achieved an error of $0.06$ in $2.1$ secs.
For closer class centers, namely $2$ units apart, 
the SVM took $98$ secs achieving an error of $0.16$, while the LSVM achieved $0.17$ in $2.1$ secs. 
The ''anchor'' SVM achieved an error of $0.16$ in $2.1$ secs.

In a second experiment, we considered a relational classification task on the Cora dataset~\citep{senNBGGE08}. The Cora dataset consists of $2708$ scientific papers classified into seven classes. Each paper is described by a binary word vector indicating the absence/presence of a word from a dictionary of $1433$ words. 
The citation network of the papers consisting of $5429$ links. The goal is to predict the class of the paper. For simplicity, we converted this problem to a binary classification problem by taking the largest of the $7$ classes as a positive class. 
We compared four different learners on Cora. The base classifiers are an $\infty$-norm regularized SVM (LP-SVM)~\citep{zhou2002linear}
and a conventional SVM (QP-SVM)~\citep{vapnik1998statistical} formulated as a convex QP. 
Both use the word feature vectors and do standard linear prediction (no kernel used). Additionally we considered
transductive, collective versions of both of them following~\citet{kersting2015}, denoted as TC-LP-SVM resp.~TC-QP-SVM.
Both transductive approaches have access to the citation network and implement the following simple rule: whenever we have access to an unlabaled paper $i$, if there is a cited or citing labeled paper $j$, then assume the label of $j$ as a label of $i$. To account for contradicting constraint (a paper citing both papers of and not of its class), we introduced separate slack variables for the transductive constraints and add them to the objective with a different penalty parameter. 
This can easily be implemented by adding a few lines to an existing standard QP-SVM formulation as illustrated in Fig.~\ref{rlp:tclpsvm}.  
In order to investigate the performance, we varied the amount of labeled examples available. That is, we have four cases, where we restricted the amount of labeled examples to $t = 20\%$, $40\%$, $60\%$, and $80\%$ of size of the dataset. We first randomly split the dataset into a labeled set $L$ and an unlabeled test set $B$, according to $t$. Then, we split $L$ randomly in half, leaving one half for training - $A$, the other half becoming a validation set $C$. The validation set was used to select the parameters of the TC-QP-SVM in a $5$-fold cross-validation fashion. That is, we split the validation set into 5 subsets $C_i$ of equal size. On these sets we selected the parameter using a grid search for each $C_i$ on a $A \cup (C \setminus C_i)$ labeled and $B \cup C_i$ unlabeled examples, computing the prediction error on $C_i$ and averaging it over all $C_i$s. We then evaluated the selected parameters on the test set $B$ whose labels were never revealed in training. 
We repeated this experiment $5$ times (one for each $C_i$) for the TC-SVMs. For consistency, we followed the same protocol with QP-SVM and LP-SVM, except that the set $B \cup C_i$ did not appeared during training as the non-transductive learners have no use for unlabeled examples.
That is, we selected parameters by training on $A \cup (C \setminus C_i)$ and evaluating on $C_i$. The selected parameters were then evaluated on the test set $B$.
For all SVM models, we also ran a ground and a lifted version. 
The results are summarized in Fig.~\ref{fig:result_1}. The QP-SVM outperforms the LP-SVM in terms of accuracy for each setting and in turn both are ouperformed  by TC-QP-SVM. While there was no appreciable symmetry in either QP-SVM or LP-SVM, TC-QP-SVM exhibited significant variable and constraint reduction: the lifted problem was reduced to up to 78\% of the variables, resp., 70\% of the constraints of the ground problem, while computing the same labels and in turn accuracy. 

Qualitatively similar results 
were obtained in a final experiment on the two-moons dataset with 150 additional features, each drawn randomly from a Gaussian per example, and using the 4-nearest-neighbour graph as "citation network". 

\begin{figure*}[t]
\centering
\begin{subfigure}{1\textwidth}
\centering\tiny
\begin{lstlisting}[language=ampl,basicstyle=\ttfamily\tiny,basewidth  = {.5em,0.4em},
  columns    = flexible]
linked(I1, I2) = label(I1) & query(I2) & (cite(I1, I2) | cite(I2, I1)) # query for the transductive constraint
slacks   = sum{I in labeled(I)} slack(I);  coslacks = sum{I1, I2 in linked(I1, I2)} slack(I1,I2)  # inline definitions
# QUADRATIC OBJECTIVE, the main novelty compared to [Kersting et al., 2015]  
minimize: sum{J in feature(I,J)} weight(J)**2 + c1 * slack + c2 * coslack;         
subject to forall {I in labeled(I)}: labeled(I)*predict(I) >= 1 - slack(I); # push labeled examples to the correct side 
subject to forall {I in labeled(I)}: slack(I) >= 0;   # slacks are positive 	
# TRANSDUCTIVE PART: cited instances should have the same labels. 
subject to forall {I1, I2 in linked(I1, I2)}: labeled(I1) * predict(I2) >= 1 - slack(I1, I2);
subject to forall {I1, I2 in linked(I1, I2)}: coslack(I1, I2) >= 0;    #coslacks are positive
\end{lstlisting}
  \caption{TC-QP-SVM encoded in a novel QP extension of the relational LP language in~\citep{kersting2015}.\label{rlp:tclpsvm}}
\end{subfigure}

\begin{subfigure}{0.26\textwidth}
\centering
    \includegraphics[width=\textwidth]{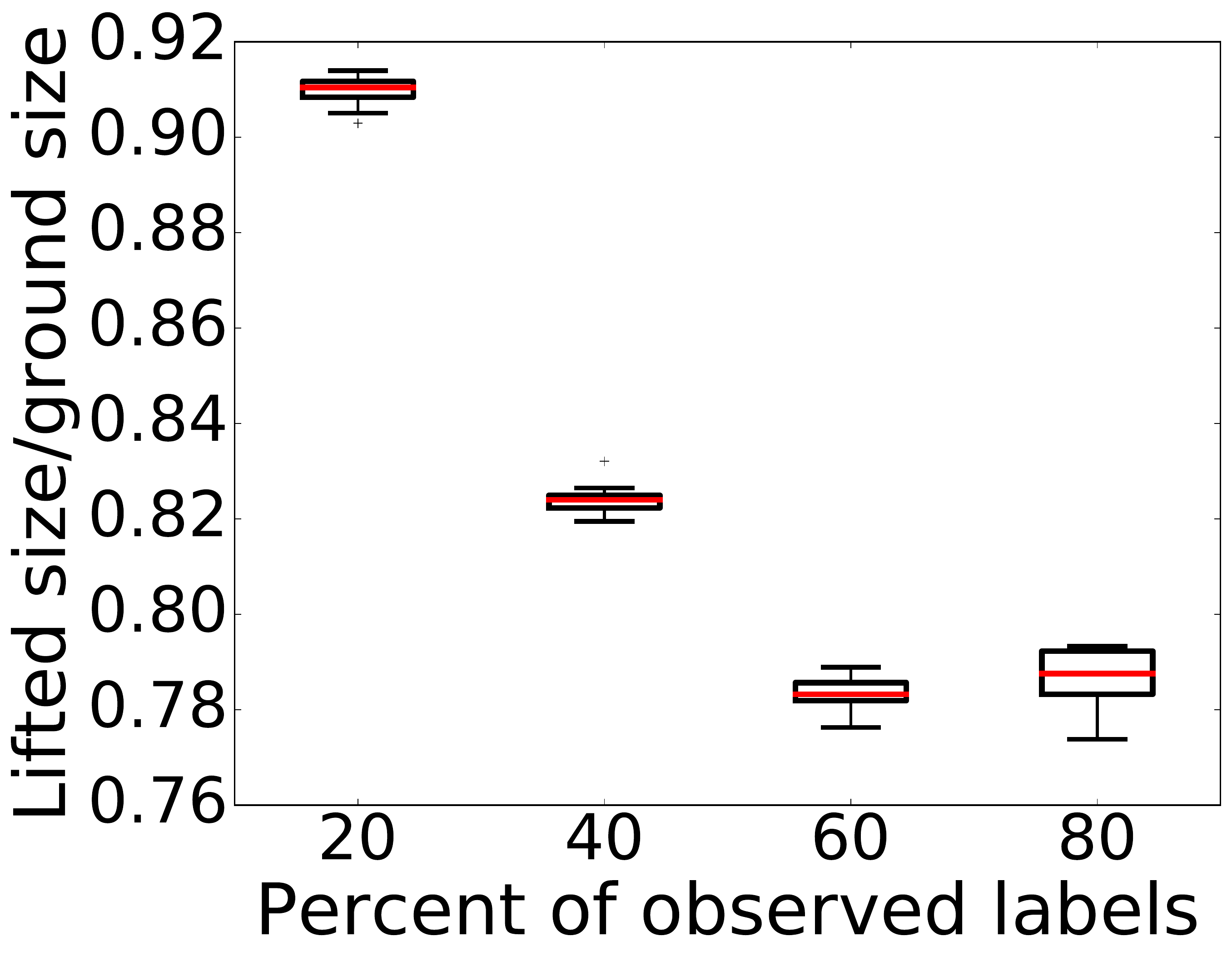}
    \caption{Variable compression\label{fig:var_compres}}
\end{subfigure}
\quad\quad\quad
\begin{subfigure}{0.26\textwidth}\centering

 \includegraphics[width=\textwidth]{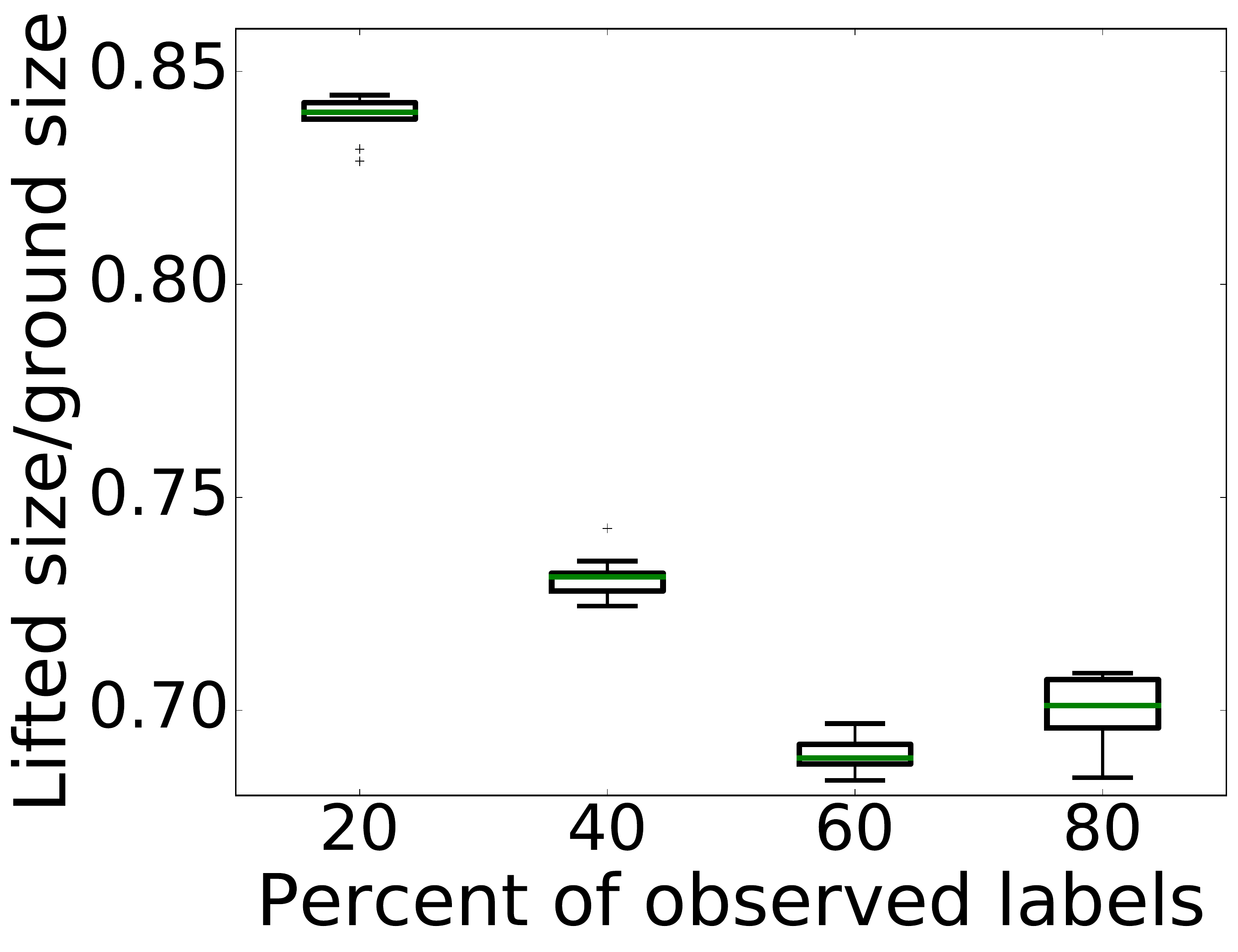}
     \caption{Constraint compression \label{constr_compr}}
\end{subfigure}
\quad\quad\quad
\begin{subfigure}{0.26\textwidth}\centering
\centering
\includegraphics[width=\textwidth]{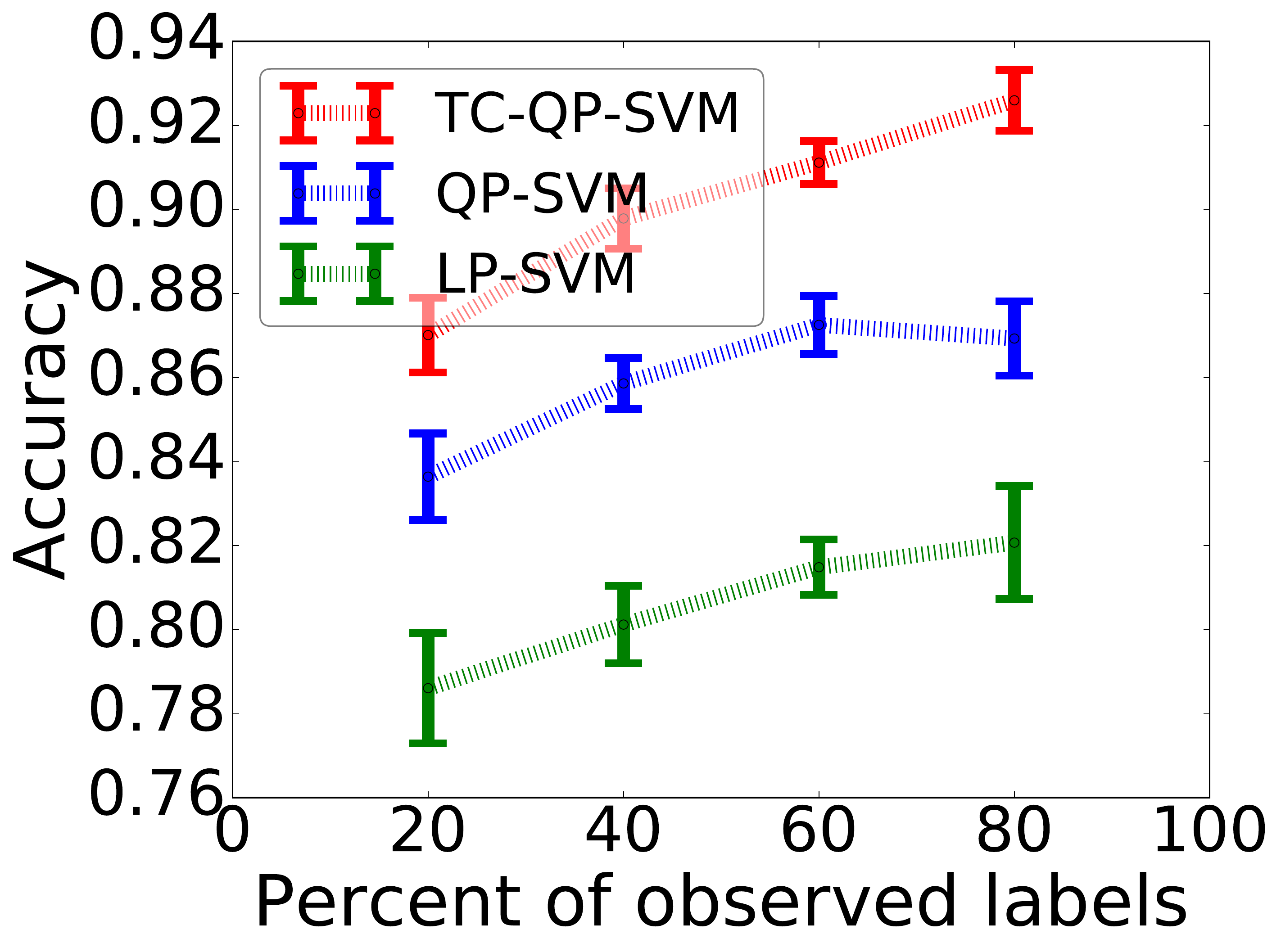}
\caption{Accuracy\label{fig:result_2}}
\end{subfigure}
\caption{Relational experiments on the Cora dataset. (Best viewed in color)\label{fig:result_1}}

\end{figure*}

Overall, the results of our experiments are clear evidence for an affirmative answer to question {\bf (Q)}.

\section{Conclusions} 
We have deepen the understanding of symmetries in machine learning and significantly extend the scope of lifted inference.
Specifically, we have introduced and studied a precise
mathematical definition of fractional symmetry of convex QPs. 
Using the tool of fractional automorphism, orbits of optimization 
variables are obtained, and lifted solvers materialize
as performing the corresponding optimisation problem
in the space of per-orbit optimization variables.
This enables the lifting of a large class of 
machine learning tasks and approaches.  We here instantiated this for SVMs by 
developing the first lifted solver
for SVMs and illustrating empirically its potential. In the future, other ML settings should be explored. One could also
deepen our theoretical results on more datasets, investigate the connection to other data reduction methods, develop
approximate WL graph kernels, and
move beyond convex QPs. Most significantly, our framework offers a mathematical foundation for symmetry-based machine learning~\citep{GensD14}.

{\bf Acknowledgements} The authors would like to thank the anonymous reviewers for their feedback. The work was partly supported by the 
DFG Collaborative Research Center SFB 876, project A6.

\bibliographystyle{plainnat}
\small
\bibliography{biblio}  
\end{document}